\newtheorem{theorem}{Theorem}
\newtheorem{definition}[theorem]{Definition}
\newtheorem{algorithm}[theorem]{Algorithm}
\newtheorem{result}[theorem]{Result}
\renewcommand{\[}{\begin{equation}}
\renewcommand{\]}{\end{equation}}
\def\<#1\>{\begin{align}#1\end{align}}
\begin{document}

\title{Techniques for proving Asynchronous Convergence results for Markov Chain Monte Carlo methods}

\author{
\textbf{Alexander Terenin}\thanks{Website: \textsl{\href{http://avt.im/}{http://avt.im/}}} \\
Petuum, Inc. and \\
Imperial College London \\
\textsc{\href{mailto:a.terenin17@imperial.ac.uk}{a.terenin17@imperial.ac.uk}}
\and
\textbf{Eric P. Xing} \\
Petuum, Inc. and \\
Carnegie Mellon University \\
\textsc{\href{mailto:epxing@cs.cmu.edu}{epxing@cs.cmu.edu}}
}

\maketitle

\begin{abstract}
Markov Chain Monte Carlo (MCMC) methods such as Gibbs sampling are finding widespread use in applied statistics and machine learning.
These often require significant computational power, and are increasingly being deployed on parallel and distributed systems such as compute clusters.
Recent work has proposed running iterative algorithms such as gradient descent and MCMC in parallel \emph{asynchronously} for increased performance, with good empirical results in certain problems.
Unfortunately, for MCMC this parallelization technique requires new convergence theory, as it has been explicitly demonstrated to lead to divergence on some examples.
Recent theory on \emph{Asynchronous Gibbs sampling} describes why these algorithms can fail, and provides a way to alter them to make them converge.
In this article, we describe how to apply this theory in a generic setting, to understand the asynchronous behavior of any MCMC algorithm, including those implemented using parameter servers, and those not based on Gibbs sampling. 
\\\strut\\
\textbf{Keywords:} Bayesian statistics, big data, Gibbs sampling, iterative algorithm, Metropolis-Hastings algorithm, parallel and distributed systems, parameter server.
\end{abstract}

\section{Introduction}

Drawing samples from a posterior probability distribution for Bayesian learning is a fundamental task in today's statistical practice, machine learning, and data science.
Indeed, Bayesian methods including Gaussian Processes, Dirichlet Processes, mixed-effects regression, and many others, have found increased industrial application in recent years -- Latent Dirichlet Allocation \cite{blei03} is one widely-deployed example.

Unfortunately, Markov Chain Monte Carlo (MCMC) methods -- the cornerstone of modern Bayesian computation over the last two decades -- often do not scale well with data size or model complexity.
Recent work has sought to address this issue by deploying these methods on parallel computational hardware such as GPUs \cite{terenin16b} and compute clusters \cite{newman09, ihler12}.

In the systems community, there has been recent interest in \emph{asynchronous} approaches to machine learning, particularly for optimization-based methods.
In an asynchronous algorithm, worker nodes will perform computation \emph{as fast as they can}, without waiting on updates from other workers to arrive.
\textcite{ho13} and \textcite{wei15} have proposed a distributed system architecture called a \emph{parameter server} for running machine learning algorithms in this fashion.
Asynchronous methods can utilize hardware more effectively than fully synchronous methods, because worker nodes don't spend time waiting for other nodes, but require detailed attention because their convergence is in general not implied by standard optimization and Monte Carlo theory.

For Monte Carlo, asynchronous methods were first considered by the Latent Dirichlet Allocation (LDA) community, applied to Gibbs sampling.
In particular, \textcite{newman09} proposed an \emph{Asynchronous Gibbs sampler} for LDA \cite{terenin17b}, which was subsequently analyzed in an LDA-specific fashion by \textcite{ihler12}.
\textcite{johnson13} showed that Asynchronous Gibbs can diverge on certain Gaussian targets.
The first fully generic theory developed was by \textcite{terenin16a}, where it was shown that if workers are allowed to occasionally reject messages from other workers in a systematic way, then Asynchronous Gibbs Sampling converges.
Finally, \textcite{desa16}, whose work appeared concurrently, analyzed Asynchronous Gibbs in a completely different manner.

In this article, we outline how to apply the framework in \textcite{terenin16a} beyond the context of Gibbs sampling.
We prove that Asynchronous MCMC in a \emph{shared memory} setting converges without any additional assumptions.
We show how these ideas extend to the \emph{compute cluster} setting, using the framework to prove a new convergence result for Asynchronous MCMC on parameter servers.

This article is purely theoretical: there will be no data or discussion of performance. 
Our aim is to present and illustrate the types of analyses that are possible with current theory.

\section{Asynchronous Markov Chain Monte Carlo} \label{intro}

\begin{figure}[t]
\centerline{
\xymatrix@R-2ex{
\text{Fully Synchronous}:\hspace*{-6ex} & x_0 \ar[r]^P & x_1 \ar[r]^P & x_2 \ar[r]^P & x_3 \ar[r]^P & x_4 \ar[r]^P & x_5 \ar[r]^P & x_6 \ar[r]^P & x_7
\\
\text{Asynchronous}:\hspace*{-10ex} & x_0 \ar[r]_P \ar@/^/[rr]^P & x_1 \ar@/_/[rr]_P & x_2 & x_3 \ar[r]_P \ar@/^/[rrr]^P & x_4 \ar[r]_P & x_5 \ar@/_/[rr]_P & x_6 & x_7
}
}
\caption{Iterations of fully synchronous MCMC and asynchronous MCMC with shared memory.} \label{async-mcmc-shmem}
\end{figure}
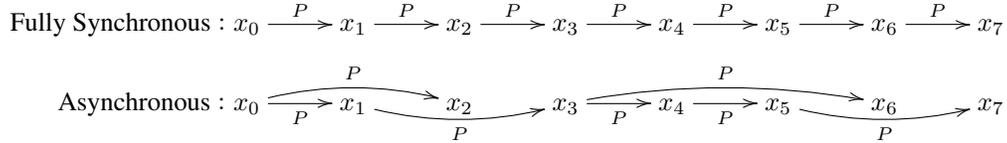

Before analyzing asynchronous MCMC, we first describe notation.
We begin by defining a convergent fully synchronous algorithm, on which subsequent asynchronous algorithms will be based.

\begin{definition} \label{underlying-chain}
Let $\Omega$ be a topological space representing the parameter space for the given problem.
Let $\mathscr{M}$ be the space of Borel probability measures over $\Omega$.
Let $\pi \in \mathscr{M}$ be the target probability measure, and let $\mu \in \mathscr{M}$ be an arbitrary measure.
Define a Markov operator $P: \mathscr{M} \goesto \mathscr{M}$ satisfying
\[
||P^k(\mu) - \pi)||_{\text{TV}} \goesto 0
\]
as $k \goesto \infty$, where $\text{TV}$ denotes total variation distance.
\end{definition}

To unpack this definition, consider an example.
If we take $\Omega = \R$ and $\pi$ to be Gaussian, then $P$ could be a Metropolis-Hastings algorithm targeting said Gaussian.
At time $k=0$, if $\mu$ is the distribution of the algorithm, then its distribution at time $k=1$ is $P(\mu)$.
It is important to highlight that $P$ converges on the space of measures $\mathscr{M}$, not on $\Omega$.
Figure \ref{async-mcmc-shmem} illustrates one possible trajectory of $P$ in a fully synchronous setting.

\subsection{Asynchronous MCMC on multiple threads with shared memory}

Now, we consider extending the above framework to the simplest asynchronous setting: a multithreaded computer that stores the current algorithm's state in shared memory.
The process will run according to the following procedure in parallel on every thread.

\begin{algorithm} \label{async-mcmc-shmem-algo}
For each thread do the following in parallel with no synchronization.
\begin{enumerate}
\item Read a value $x$ with distribution $\mu$ from shared memory.
\item Compute the random variable $x^*$ with distribution $\mu^* = P(\mu)$.
\item Write $x^*$ to shared memory, so that its distribution becomes $\mu^*$.
\end{enumerate}
\end{algorithm}

Algorithm \ref{async-mcmc-shmem-algo} is illustrated in Figure \ref{async-mcmc-shmem}.
We may understand it from two dual perspectives.
\begin{enumerate}
\renewcommand{\theenumi}{\alph{enumi}}
\renewcommand{\labelenumi}{(\theenumi)}
\item For a fixed value $x \in \Omega$, the next value $x^*$ is a random variable. \label{random-perspective}
\item For a fixed measure $\mu \in \mathscr{M}$, the next measure $\mu^*$ is also fixed. \label{deterministic-perspective}
\end{enumerate}

Hence, an MCMC algorithm is a random algorithm with respect to $\Omega$, but a non-random algorithm with respect to $\mathscr{M}$.
In the fully synchronous setting, perspective (\ref{random-perspective}) tends to be more powerful, because it allows us to analyze $P$ using stochastic process theory.
In the asynchronous setting, working with this perspective becomes very challenging because the random variables representing states of the algorithm no longer possess the Markov property.
We will thus utilize perspective (\ref{deterministic-perspective}) in this analysis.
The following property will be of interest.

\begin{result} \label{monotonic-convergence}
Suppose that $P$ is a Markov operator with unique limiting distribution $\pi$ and initial distribution $\mu$. Then
\[ \label{asynchronous-computation-2}
||P^{k+1} (\mu) - \pi||_{\f{TV}} \leq ||P^k (\mu) - \pi||_{\f{TV}}
\]
and we say that $P$ is a contracting operator with respect to the total variation metric.
\end{result}

\begin{proof}
\textcite{meyn93}, Proposition 13.3.2.
\end{proof}

For any Markov chain that converges to its stationary distribution, this implies that as $k \goesto \infty$, we have $||P^k(\mu) - \pi)||_{\text{TV}} \goesdownto 0$.
We can use this property to prove that Algorithm \ref{async-mcmc-shmem-algo} converges.

\newcommand{\TheoremFour}{
Let $P$ be a Markov operator with unique limiting distribution $\pi$.
Let $x_0 \dist \mu_0$ for some arbitrary $\mu_0$.
Let $x_1, x_2, ..$ be the sequence of values written to shared memory by the threads, and let $\mu_1, \mu_2, ..$ be their distributions.
Assume that there are no partial reads or writes of $x$.
Assume that $P$ remains time-homogeneous if executed asynchronously.
Assume that the maximum time between two updates made by the same thread is bounded above by $b$.
Then we have that $\underset{k \goesto \infty}{\lim} \mu_k = \pi$.
\vspace*{-1ex}
}
\begin{theorem} \label{async-mcmc-shmem-conv}
\TheoremFour
\end{theorem}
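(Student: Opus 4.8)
The plan is to argue entirely from perspective (\ref{deterministic-perspective}): track the \emph{distribution} $\mu_k$ of each value written to shared memory rather than the values themselves, since the $x_k$ fail to be a Markov chain but the map $\mu \mapsto P(\mu)$ is a well-defined deterministic operator. Fix the schedule (which thread reads and writes at which times), so that only the sampled values are random. Because there are no partial reads or writes, at the instant the thread producing $x_k$ performs its read, shared memory holds exactly one previously-written value, which we call $x_{a(k)}$, with $a(k) < k$ (taking $a(k)=0$ if it reads the initial value $x_0$). Since $P$ is time-homogeneous under asynchronous execution, the value this thread then computes has distribution $P(\mu_{a(k)})$, so
\[ \label{recursion-step}
\mu_k = P(\mu_{a(k)}), \qquad a(k) < k .
\]
Following the pointer $a$ backwards from any $k$ strictly decreases the index and cannot go below $0$, so after finitely many steps — call this number $m(k)$ — we reach $\mu_0$. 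Iterating \eqref{recursion-step} gives $\mu_k = P^{m(k)}(\mu_0)$; intuitively $m(k)$ is the number of genuine MCMC steps the value $x_k$ has accumulated.

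The key step is then to show $m(k) \goesto \infty$ as $k \goesto \infty$, equivalently that for each fixed $M$ only finitely many $k$ satisfy $m(k) \le M$. For this I would first check that $A_j := \{k : a(k) = j\}$ is finite for every $j$: the value $x_j$ occupies shared memory only from the time it is written until the next write overwrites it, and by the bounded-delay assumption (together with the natural assumption that only finitely many operations occur in any bounded time interval) this window has length at most $b$ and contains only finitely many read operations, one for each $k \in A_j$. Now induct on $M$. The set $\{k : m(k) = 0\}$ equals $\{0\}$; and if $\{k : m(k) \le M\}$ is finite, then every $k$ with $m(k) = M+1$ has $a(k)$ lying in that finite set, so $\{k : m(k) = M+1\} \subseteq \bigcup_{j :\, m(j) \le M} A_j$ is a finite union of finite sets. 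Hence $\{k : m(k) \le M\}$ is finite for all $M$, i.e. $m(k) \goesto \infty$.

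To conclude, combine this with the two facts already available about $P$. By Definition \ref{underlying-chain}, $||P^m(\mu_0) - \pi||_{\f{TV}} \goesto 0$ as $m \goesto \infty$, and by Result \ref{monotonic-convergence} this quantity is non-increasing in $m$. Since $m(k) \goesto \infty$, we get
\[
||\mu_k - \pi||_{\f{TV}} = ||P^{m(k)}(\mu_0) - \pi||_{\f{TV}} \goesto 0 ,
\]
which is precisely $\lim_{k \goesto \infty} \mu_k = \pi$.

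I expect the main obstacle to be the finiteness argument in the middle paragraph: rigorously, each written value must be read only finitely often, and this is exactly where the bounded-delay hypothesis (and the implicit assumption of no Zeno-type behavior) is essential — without it a stale value could persist in shared memory indefinitely and the chain of $P$-applications would never lengthen. The rest is bookkeeping around \eqref{recursion-step} plus the two cited properties of $P$. A secondary point worth being careful about is the very first clause — justifying that, conditional on the schedule, the value read by a given thread is a specific one of the random variables $x_{a(k)}$ with distribution $\mu_{a(k)}$ — which is where atomicity of reads and time-homogeneity of $P$ are used.
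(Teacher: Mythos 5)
Your proof is correct, and it reaches the conclusion by a more direct route than the paper's. Both arguments hinge on the same key identification --- that each $\mu_k$ equals $P^{m(k)}(\mu_0)$ for some iteration count $m(k)$ (the paper calls this $p_k$) --- but they exploit it differently. The paper forms the sliding-window maximum $d^*_k = \max\{d_j : k-b < j \le k\}$ of the distances $d_j = ||\mu_j - \pi||_{\text{TV}}$, uses Result \ref{monotonic-convergence} together with the bounded-delay (``no worker dies'') assumption to show $d^*_k$ is non-increasing, and then extracts a subsequence along which $d^*_k \to 0$ by arguing that the windowed minimum of the iteration counts is unbounded; monotonicity then forces $d^*_k$ to decrease to $0$, hence $d_k \to 0$. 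You instead prove directly that $m(k) \to \infty$, via the finiteness of each $A_j = \{k : a(k) = j\}$ and induction on $M$ for the sets $\{k : m(k) \le M\}$, and conclude by composing limits with $||P^m(\mu_0) - \pi||_{\text{TV}} \to 0$. A notable consequence of your version is that Result \ref{monotonic-convergence} is not actually needed: once $m(k) \to \infty$, the convergence of $||P^{m(k)}(\mu_0) - \pi||_{\text{TV}}$ follows from Definition \ref{underlying-chain} alone, so your invocation of monotonicity in the final step is redundant rather than load-bearing. What the paper's window construction buys in exchange is an explicitly monotone Lyapunov-type quantity $d^*_k$, which is closer in spirit to the contraction machinery (Result \ref{async-conv}) deployed later for the cluster setting. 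Both proofs share the same soft spots --- the identification $\mu_k = P(\mu_{a(k)})$ conditional on the schedule, and the exclusion of Zeno-type behavior so that each written value is read only finitely often --- which you flag explicitly and the paper absorbs silently into its ``no worker dies'' assumption.
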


\begin{proof}
Appendix A. 
The key idea is that applying $P$ can only reduce the distance to stationarity, which causes the asynchronous algorithm to inherit convergence from its fully synchronous counterpart.
\end{proof}

Theorem \ref{async-mcmc-shmem-conv} shows us how convergence may be analyzed in settings where we do not have the Markov property.
The approach is appealing because every convergent MCMC algorithm's Markov operator is a contracting operator with respect to the total variation metric, and so it is completely general.

The main drawbacks of Theorem \ref{async-mcmc-shmem-conv} are that it requires threads to read and write the state $x$ in full, and that $x$ can only exist in one central location.
This limits the practical utility of Algorithm \ref{async-mcmc-shmem-algo}, because it makes it difficult to apply in a distributed setting.

\subsection{Asynchronous MCMC on a compute cluster}

\begin{figure}[t]
\centerline{
\xymatrix@R-2ex{
\text{Worker 1}:\hspace*{-6ex} & x_0 \ar[r] \ar[dr] & x_1 \ar[r] \ar[ddr] & x_2 \ar[r] & x_3 \ar[r] \ar[ddr] \ar[drr] & x_4 \ar[r] & x_5 \ar[r] \ar[drr] & x_6 \ar[r] & x_7
\\
\text{Worker 2}:\hspace*{-6ex} & y_0 \ar[r] & y_1 \ar[r] \ar[ur] & y_2 \ar[r] \ar[dr] & y_3 \ar[r] \ar[urr] & y_4 \ar[r] & y_5 \ar[r] \ar[dr] \ar[ur] & y_6 \ar[r] \ar[dr] & y_7
\\
\text{Worker 3}:\hspace*{-6ex} & z_0 \ar[r] \ar[urr] & z_1 \ar[r] \ar[uur]  & z_2 \ar[r] & z_3 \ar[r] \ar[ur] & z_4 \ar[r] & z_5 \ar[r] \ar[uur] & z_6 \ar[r] & z_7
}
}
\caption{Iterations of an asynchronous Gibbs sampler on a compute cluster.} \label{async-mcmc-cluster}
\end{figure}
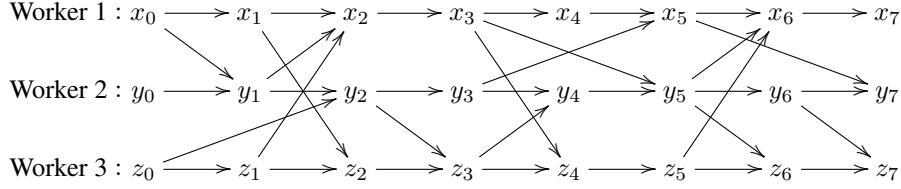

We now show how to extend the above ideas to the compute cluster setting, where the state $x$ may exist in multiple locations concurrently.
We begin by introducing the framework of \textcite{baudet78}, \textcite{bertsekas83}, and \textcite{frommer00}, as applied to MCMC by \textcite{terenin16a}.

\newcommand{\ResultFive}{
Suppose that $H: E \goesto E$ is a weakly coupled Markov operator with unique coupled limiting distribution $\Pi$ -- these refer to the entire cluster, whereas $\pi$ and $\mathscr{M}$ refer to each worker.
Let $m$ be the number of workers, assumed fixed and finite.
Assume the following.
\[
\text{Box Condition: we have } E = \bigtimes_{i=1}^m \mathscr{M}
.
\nonumber
\]
Let $H_i$ be the $i$th component of $H$ with respect to $E$ -- this is each worker's component of the coupled operator.
Assume $H$ is constructed from a Markov operator $P$ such that it inherits Result \ref{monotonic-convergence} with respect to each $H_i$.
Under appropriate regularity conditions given in \textcite{terenin16a}, $H$ will converge asynchronously in the sense of \textcite{frommer00}.
}
\begin{result} \label{async-conv}
\ResultFive
\end{result}

\begin{proof}
\textcite{terenin16a}.
\end{proof}

The main challenge in using this result is satisfying the box condition, particularly since an arbitrary measure space is not a product space.
One way to do this is to embed the target distribution of interest $\pi$ within some larger $\Pi$ that is contained in a product space, for instance by taking
\[
\Pi = (\pi,..,\pi).
\]
Having done this, we need to construct an $H$ that converges to $\Pi$.
We can do so using the Metropolis-Hastings method.
For details, see \textcite{terenin16a}, where such chains are proven to converge.
In that work, a convergent Asynchronous Gibbs sampler is constructed by taking $H_i$ to be a Metropolis-Hastings proposal based on a Gibbs step -- this is illustrated in Figure \ref{async-mcmc-cluster}.

There are many possible alternatives.
We can both choose $E$ to accommodate different kinds of parallel and distributed systems, and $H$ to accommodate other MCMC schemes.
For instance, consider the parameter server distributed system architecture of \textcite{ho13, wei15}. 
Here, we have a main node called the parameter server that allows worker nodes to read a state $x$, and send updated states $x^*$.
Upon receiving $x^*$, the parameter server updates its state using a pre-defined rule.
Consider now the following algorithm executed on a parameter server.

\begin{algorithm} \label{async-ps-algo}
For a set of workers, given an unnormalized target posterior distribution $\pi$, do the following in parallel with no synchronization.
\begin{enumerate}
\item Read $\del{x, \pi(x)}$ from the parameter server, and let $\mu$ be the distribution of $x$.
\item Compute the random variable $x^*$ using a valid MCMC step and evaluate $\pi(x^*)$.
\item Send $\del{x, x^*, \pi(x^*)}$ to the parameter server. The parameter server accepts the message and sets $x_s = x^*$ with Metropolis-Hastings probability 
\[
\min\cbr{1, \frac{\pi(x^*) f(x_s \given x)}{\pi(x_s) f(x^* \given x)}}
\]
where $x_s$ is the value of $x$ on the parameter server prior to receiving the message. 
\end{enumerate}
\end{algorithm}

As before, since the proposal distribution can depend on out-of-date states, Algorithm \ref{async-ps-algo} cannot be directly analyzed in a Markov framework.
Result \ref{async-conv} allows us to analyze convergence by pretending that communication is instantaneous, thereby allowing us to recover the Markov property and use it indirectly.
Using this idea, proving that Algorithm \ref{async-ps-algo} converges is straightforward: it essentially amounts to checking that the construction in \textcite{terenin16a} can be modified to accommodate the needed communication pattern, which is close to immediate.

It is often possible to avoid evaluating the data-dependent terms $\pi(x^*), \pi(x_s)$ on the parameter server. 
If we use, for example, a Hamiltonian Monte Carlo step, then $\pi(x)$ is immediately available from the worker node's computation.
This scheme directly addresses one of the disadvantages of Algorithm \ref{async-mcmc-shmem-algo} by allowing each worker to have a copy of the last state it saw. 
It also allows the algorithm to complete the burn-in phase more quickly by rejecting unhelpful updates from out-of-date workers.
Since our aim in this work is to showcase theory, we defer evaluation of performance to future work.

There are many other possible choices. 
Here, we have illustrated how analysis of Markov Chain Monte Carlo methods may be performed in the asynchronous distributed setting.
We hope that future work discovers more such methods.

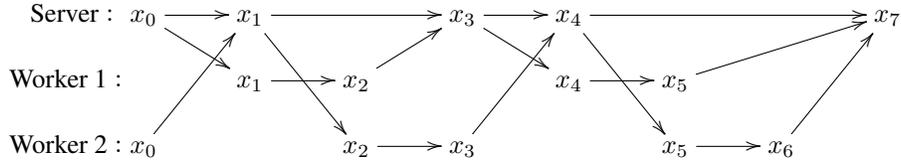
\begin{figure}[t]
\centerline{
\xymatrix@R-3ex{
\text{Server}:\hspace*{-7ex} & x_0 \ar[r] \ar[dr] & x_1 \ar[rr] \ar[ddr] & & x_3 \ar[r] \ar[dr] & x_4 \ar[rrr] \ar[ddr] & & & x_7
\\
\text{Worker 1}:\hspace*{-6ex} &  & x_1 \ar[r]  & x_2 \ar[ur] &  & x_4 \ar[r] & x_5  \ar[urr] &  & 
\\
\text{Worker 2}:\hspace*{-6ex} & x_0 \ar[uur] &   & x_2 \ar[r] & x_3 \ar[uur] &  & x_5 \ar[r] & x_6 \ar[uur] &
}
}
\caption{Iterations of an asynchronous MCMC sampler on a parameter server.} \label{async-mcmc-ps}
\end{figure}

\subsubsection*{Acknowledgments}

We are grateful to David Draper, Willie Neiswanger, and Hao Zhang for their thoughts.

\newpage
\printbibliography

\section*{Appendix A: proof of Theorem \ref{async-mcmc-shmem-conv}}

{
\renewcommand{\thetheorem}{\ref{async-mcmc-shmem-conv}}
\begin{theorem}
\TheoremFour
\end{theorem}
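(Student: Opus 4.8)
The plan is to use the contraction property of Result \ref{monotonic-convergence} to show that every distribution $\mu_k$ written to shared memory is a power of $P$ applied to $\mu_0$, with an exponent that grows without bound, so that convergence is inherited from Definition \ref{underlying-chain}. First I would set up an \emph{ancestry structure} on the written states. Since there are no partial reads or writes, each write $x_k$ with $k \geq 1$ is produced by a single application of $P$ -- time-homogeneous by assumption -- to exactly one earlier state $x_{r(k)}$ that the writing thread had previously read, where $x_0$ is the initial state and necessarily $r(k) < k$. Passing to distributions via perspective (\ref{deterministic-perspective}) gives $\mu_k = P(\mu_{r(k)})$. Iterating $r$, which strictly decreases and hence reaches $0$ in finitely many steps, yields $\mu_k = P^{D_k}(\mu_0)$, where $D_k \geq 1$ is the length of the ancestry chain of $x_k$; equivalently, $D_k$ is the depth of the node $k$ in the tree on $\{0,1,2,\dots\}$ in which $r(k)$ is the parent of $k$.

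The heart of the argument is to prove that $D_k \to \infty$ as $k \to \infty$. The key combinatorial observation is that each node has at most $m$ children, where $m$ is the finite number of threads: the children of $j$ are exactly the writes produced from reads of $x_j$, and every such read occurs during the time window in which $x_j$ is the current shared value, i.e.\ between the writes of $x_j$ and of $x_{j+1}$; since a single thread's successive reads are always separated by one of its own writes to shared memory, and no write occurs strictly between two consecutive writes, each thread reads $x_j$ at most once. Consequently the number of nodes of depth at most $N$ is at most $m + m^2 + \dots + m^N < \infty$ for every $N$. The bounded-delay assumption forces every thread to keep writing over the course of the run, so the sequence $x_1, x_2, \dots$ is infinite; therefore $\{k : D_k \leq N\}$ is finite for each $N$, which is precisely the statement $D_k \to \infty$.

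Finally I would assemble the pieces. Fix $\varepsilon > 0$. By Definition \ref{underlying-chain} choose $N$ with $||P^N(\mu_0) - \pi||_{\f{TV}} < \varepsilon$, and then by Result \ref{monotonic-convergence} one has $||P^j(\mu_0) - \pi||_{\f{TV}} < \varepsilon$ for all $j \geq N$. By the previous step choose $K$ with $D_k \geq N$ for all $k \geq K$. Then for $k \geq K$, $||\mu_k - \pi||_{\f{TV}} = ||P^{D_k}(\mu_0) - \pi||_{\f{TV}} < \varepsilon$, so $\mu_k \to \pi$ in total variation, i.e.\ $\lim_{k\to\infty}\mu_k = \pi$.

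I expect the main obstacle to be making the second paragraph fully rigorous: translating the operational hypotheses (atomic reads and writes, finitely many threads, bounded delay) into the clean combinatorial claim that ancestry depths are unbounded, while carefully handling edge cases near the start of the run and pinning down the exact meaning of ``the distribution of the value read from shared memory'' when several threads may have read the same state. Everything else is bookkeeping on top of Result \ref{monotonic-convergence} and Definition \ref{underlying-chain}.
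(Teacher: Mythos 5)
Your proof is correct and rests on the same two pillars as the paper's: every written distribution is an exact power of $P$ applied to $\mu_0$ (your $\mu_k = P^{D_k}(\mu_0)$ is the paper's bookkeeping variable $p_k$), and the contraction property of Result \ref{monotonic-convergence} then reduces everything to showing that the exponent diverges. Where you genuinely differ is in how you force the exponent to infinity and in how you close the argument. The paper works with the sliding-window quantities $d^*_k = \max\cbr{d_j : k-b < j \leq k}$ and $p^*_k = \min\cbr{p_j : k-b < j \leq k}$, uses the delay bound $b$ to show that $d^*_k$ is nonincreasing and $p^*_k$ is unbounded, and finishes with a monotone-sequence-plus-convergent-subsequence argument. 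You instead organize the writes into an ancestry tree, bound the branching factor by the number of threads $m$ (each thread reads a given shared state at most once, since its next read is preceded by its own later write), and conclude that only finitely many writes have depth at most $N$; the final step is then a direct $\varepsilon$--$N$ argument needing neither the window extrema nor the subsequence extraction. Your route is arguably cleaner and isolates exactly what is needed: contraction plus unbounded ancestry depth. The one point to flag is that your divergence argument trades the stated hypothesis for an unstated one: it relies on the number of threads being finite (implicit in Algorithm \ref{async-mcmc-shmem-algo} but not listed among the theorem's assumptions) and uses the bound $b$ only to guarantee that infinitely many writes occur, whereas the paper consumes $b$ directly. Under the paper's formalization of the delay assumption, namely $\mu_{k+1} = P(\mu_j)$ for some $k-b < j \leq k$, each node of your tree has at most $b$ children, so your counting argument goes through verbatim with $b$ in place of $m$ if you wish to avoid the extra assumption. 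The measure-theoretic subtlety you flag at the end -- whether the value read is genuinely distributed as the marginal $\mu_{r(k)}$ when scheduling may correlate with the state -- is real, but the paper glosses over it at exactly the same level, absorbing it into the ``time-homogeneous if executed asynchronously'' hypothesis.
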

}

\begin{proof}
Fix an arbitrary $\mu_0$. 
Fix $b \in \N$.
We assume the maximum time between two updates made by the same worker is bounded in the sense that we have $\mu_{k+1} = P(\mu_j)$ for some $k-b < j \leq k$: call this the \emph{no worker dies} assumption.
Define the values $d_k = ||\mu_k - \pi||_{\text{TV}}$.
Now, consider
\[
d^*_k = \max\{d_j : k-b < j \leq k\}
\]
defined for $j > b$.
We claim that $d^*_k \geq d^*_{k+1}$.
First, note that
\[ 
d^*_{k+1} = \max\{d_j : k-b+1 < j \leq k+1 \} = \max\cbr{\max\{d_j : k-b+1 < j \leq k \}, d_{k+1}}
.
\label{max-max}
\]
Consider the left-hand term inside $\max$. 
We have that
\[
d^*_k = \max\{d_j : k-b < j \leq k \} \geq \max\{d_j : k-b+1 < j \leq k \}
\]
because $\max$ is taken over a smaller domain. 
Now, consider the right-hand term.
By the \emph{no worker dies} assumption and Result \ref{monotonic-convergence}, we have that
\[
d^*_k = \max\{d_j : k-b < j \leq k \} \geq d_{k+1}
.
\]
Together, this establishes that $d^*_k \geq d^*_{k+1}$.
Since by definition $d^*_k \geq 0$, the sequence $d^*_k$ is monotone and bounded, therefore it converges.
We now claim that there exists a subsequence
\[
d^*_{l_k} \text{ such that } d^*_{l_k} \goesto 0 \text{ as } k \goesto \infty.
\]
For each value $\mu_k$, let $p_k$ denote the number of times the Markov transition kernel $P$ was applied to obtain $\mu_k$, i.e. if $\mu_k = P^3(\mu_0)$ then $p_k=3$.
Let
\[
p^*_k = \min\{p_j : k-b < j \leq k\}
\]
which is nondecreasing and unbounded by the \emph{no worker dies} assumption -- to see this formally, decompose $\min$ as in (\ref{max-max}).
Define the sequence of times
\[
l_k = \arg\max\{p^*_j : 0 < j < k\} 
\]
which must also be unbounded.
But then we have $P^{l_k + 1}(\mu_0) = P[P^{l_k}(\mu_0)]$, so we can see by applying Definition \ref{underlying-chain} that as $k \goesto \infty$ we have
\[
||P^{l_k}(\mu_0) - \pi|| \goesto 0 \text{ and hence } d^*_{l_j} \goesto 0
\]
which gives the desired subsequence.
Finally, since $d^*_k$ is monotone, bounded below by 0, and admits an infinite subsequence that converges to 0, we have that $d^*_k \goesdownto 0$.
Since $d^*_k \geq d_k$, this implies $d_k \goesto 0$, which completes the proof.
\end{proof}

\end{document}